\newcommand{\reals}{\mathbb{R}}
\newcommand{\norm}[1]{\|#1\|}
\newcommand{\Ocal}{\mathcal{O}}
\newcommand{\bw}{\mathbf{w}}
\newcommand{\ow}{\overline \bw}
\newcommand{\ob}{\overline b}
\newcommand{\bv}{\mathbf{v}}
\newcommand{\bu}{\mathbf{u}}
\newcommand{\bx}{\mathbf{x}}
\newcommand{\ox}{\overline \bx}
\newcommand{\brac}[1]{[#1]}
\providecommand{\keywords}[1]{\textit{\textbf{Keywords\quad}#1}}
\newtheorem{lemma}{Lemma}
\title{Linear Learning with Sparse Data}
\author{Ofer Dekel\\{\tt oferd@microsoft.com}}
\begin{document}
\maketitle

\begin{abstract}
\noindent Linear predictors are especially useful when the data is
high-dimensional and sparse. One of the standard techniques used to
train a linear predictor is the \emph{Averaged Stochastic Gradient
  Descent} (ASGD) algorithm. We present an efficient implementation of
ASGD that avoids dense vector operations. We also describe a
translation invariant extension called Centered Averaged Stochastic
Gradient Descent (CASGD).
\end{abstract}

\keywords{machine learning, linear predictor, stochastic gradient descent, Polyak-Ruppert averaging, sparsity, efficient implementation}

\section{Introduction}
We are given a training set of labeled examples,
$\{(x_i,y_i)\}_{i=1}^m$, where each $x_i \in \reals^n$ is called a
\emph{feature vector} and $y_i \in \reals$ is its corresponding
label. We are also given a loss function $\ell : \reals^2 \mapsto
\reals$, defined over pairs of labels, where $\ell(p, y)$ is
understood to be the penalty associated with predicting the label $p$
when the correct label is known to be $y$. We restrict our discussion
to loss functions that are convex in their first argument. Different
choices of $\ell$ lead to different learning problems. For example,
choosing $\ell$ to be the \emph{absolute loss} or \emph{squared loss}
induces a regression problem, whereas choosing the \emph{hinge loss}
or \emph{log-loss} induces a binary classification problem (see
\cref{tab:losses} for the definitions of these loss functions).

\begin{table}[t] 
  \begin{center}
  \begin{tabular}{@{}rll@{}} \toprule
    name & definition & (sub)derivative \\ \midrule
    \text{absolute loss} & $\ell(p , y) = |p - y|$                    & $\ell'(p , y) = \begin{cases} -1 &\text{if~} p \leq y \\ 1 & \text{otherwise}\end{cases}$ \\
    \text{squared loss}  & $\ell(p , y) = \frac{1}{2}(p - y)^2$       & $\ell'(p , y) = p - y$ \\
    \text{hinge loss}    & $\ell(p , y) = \max\big\{1 - p y, 0\big\}$ & $\ell'(p , y) = \begin{cases} -y &\text{if~} p y \leq 1 \\ 0 & \text{otherwise}\end{cases}$ \\
    \text{log-loss} & $\ell(p , y) = \log\big(1 + \exp(-p y)\big)$    & $\ell'(p , y) = \frac{-y}{1 + \exp(p y)} $\\
    \bottomrule
  \end{tabular}
  \end{center}
\caption{Examples of convex loss functions and their (sub)derivatives.} \label{tab:losses}
\end{table}

A linear predictor is a pair $(\bw,b)$, where $\bw \in \reals^n$ is
called the \emph{weights} vector and $b \in \reals$ is called the
\emph{bias}. Given a feature vector $x \in \reals^n$, the linear
predictor predicts the real-valued label $\bw \cdot x + b$. Therefore,
the loss incurred by the linear predictor $(\bw,b)$ on the training
example $(x_i,y_i)$ is $\ell(\bw \cdot x_i + b\,,\, y_i)$, and the
average loss on the entire training set is
$$
\frac{1}{m}\sum_{i=1}^m \ell(\bw \cdot x_i + b\,,\, y_i)~~.
$$
To promote statistical generalization, we add a regularization term to
the average loss and arrive at the objective function
\begin{equation}\label{eqn:objective}
F(\bw,b)~=~\frac{\lambda}{2} \big( \norm{\bw}^2 + b^2 \big) ~+~ \frac{1}{m}\sum_{i=1}^m \ell(\bw \cdot x_i + b\,,\, y_i)~~,
\end{equation}
where $\lambda$ is a user-defined regularization parameter. The goal
of our algorithms is to efficiently find the linear predictor that
minimizes $F$. As mentioned above, we solve this optimization problem
using the Averaged Stochastic Gradient Descent (ASGD) algorithm.

\section{Sparse Vector Operations}
In many high dimensional machine learning problems, the feature
vectors are sparse.  Namely, only a small subset of each feature
vector's entries are non-zero.  Concretely, we assume that, on
average, there are $k$ non-zero elements in each feature vector, where
$k \ll n$.  A good example of a supervised machine learning problem
with high dimensional sparse data is text categorization using a
bag-of-words feature representation. In this setting, the dimension,
$n$, is the number of words in the dictionary, which could be in the
millions. On the other hand, the number of non-zeros in each featuer
vector, $k$, is the number of unique words in a single document, which
could be a few hundreds.

Although the feature vectors are sparse, the linear predictor that
optimizes \cref{eqn:objective} can have a dense weights vector. To
emphasize that some vectors are sparse and others are dense, we denote
dense vectors using boldface roman letters, such as $\bw$, $\bv$, and
$\bu$.

Sparse vectors can be stored using a space-efficient
representation. For example, the non-zero vector elements can be
stored as a list of index-value pairs. Moreover, many standard
operations involving sparse feature vectors can be done in $\Ocal(k)$
steps, rather than $\Ocal(n)$ steps. We call these operations
\emph{sparse vector operations} and distinguish them from the more
costly \emph{dense vector operations}. For example, if $\bv$ is a
dense vector stored in a random-access representation (such as an
array), $\alpha$ is a scalar, and $x$ is a sparse vector, then the
operation $\bv \gets \bv + \alpha x$ is a sparse vector operation:
iterate over the $k$ non-zero elements of $x$ and update the
corresponding entries in $\bv$. Similarly, calculating the dot product
$\bv \cdot x$ requires only $\Ocal(k)$ steps.

Since sparse operations are much faster than dense operations, we want
to implement ASGD using only a small constant number of dense
operations. Specifically, this implies that we can only perform sparse
vector operations inside the gradient descent loop.  More precisely,
if ASGD performs $T$ gradient descent steps, its total running time
should be $\Ocal(n + Tk)$ rather than $\Ocal(Tn)$.

\section{Stochastic Gradient Descent}
As a warm-up to ASGD, we first discuss the simpler Stochastic Gradient
Descent (SGD) algorithm \cite{robbins51,bottou91}. SGD is an
interative optimization technique that runs for $T$ steps and produces
a sequence of intermediate linear predictors $\big((\bw_t, b_t)
\big)_{t=0}^T$. The first predictor in the sequence, $(\bw_0, b_0)$,
is initialized to zero. SGD performs $T$ gradient descent steps, each
one with respect to an individual training example that is drawn
uniformly from the training set.  Formally, let $\pi_1,\ldots,\pi_T$
be a sequence of independently drawn random indices, each between $1$
and $m$; on iteration $t$ the algorithm processes training example
$\pi_t$.

To derive the SGD update, We use the square-bracket notation $[\bw,b]$
to denote the concatenation of $\bw$ and $b$. Similarly, we use
$[x,1]$ to denote the concatenation of the value $1$ to the end of the
vector $x$. The subgradient of \cref{eqn:objective} is,
\begin{equation}\label{eqn:gradient}
\nabla F(\bw, b) ~=~
\lambda \brac{\bw, b} ~+~ \frac{1}{m}\sum_{i=1}^m \ell'(\bw \cdot x_i + b\,,\, y_i) \; \brac{x_i, 1}~~.
\end{equation}
If $\pi$ is a random index, chosen uniformly between $1$ and $m$, then 
$$
\lambda \brac{\bw, b} ~+~ \ell'(\bw \cdot x_\pi + b\,,\, y_\pi) \; \brac{x_\pi, 1}
$$
is an unbiased estimator of \cref{eqn:gradient}, also called a
\emph{stochastic gradient} of the objective function in
\cref{eqn:objective}. Each SGD step subtracts a scaled stochastic
gradient from the current predictor. The algorithms allows for some
flexibility in choosing the size of each step, and we choose the size
of step $t$ to be $1/\lambda t$, where $\lambda$ is the regularization
parameter in \cref{eqn:objective}. This step size is motivated by the
theoretical convergence analysis of SGD with strongly convex objective
functions \cite{hazan07,shalev11}. Overall, the update on iteration
$t$ takes the form
\begin{align*}
  \brac{\bw_t, b_t} ~=~& \brac{\bw_{t-1}, b_{t-1}} ~-~ \frac{1}{\lambda t} \left(\lambda \, \brac{\bw_{t-1}, b_{t-1}}
  ~+~ \ell'(\bw_{t-1} \cdot x_{\pi_t} + b_{t-1} \,,\, y_{\pi_t})\; \brac{x_{\pi_t}, 1}\right)~~.
\end{align*}
Rearranging terms above gives 
\begin{align} \label{eqn:GDStep}
  \brac{\bw_t, b_t}~=~& \left(1 - \frac{1}{t}\right) \brac{\bw_{t-1}, b_{t-1}} ~-~ \frac{\ell'(p_t,\, y_{\pi_t})}{\lambda t} \; \brac{x_{\pi_t}, 1}
  \quad \text{where}\quad p_t =  \bw_{t-1} \cdot x_{\pi_t} + b_{t-1} ~~.
\end{align}
Recall that our goal is to avoid all dense vector operations when
performing each SGD step. The vector $\brac{\bw_{t-1}, b_{t-1}}$ on
the right-hand side above is likely a dense vector, and therefore a
na\"ive implementation of the scaling operation $(1 - t^{-1})
\brac{\bw_{t-1}, b_{t-1}}$ would require $\Ocal(n)$ steps. To avoid this,
we introduce the \emph{gradient sum} variable, defined for each $t$ as
\begin{equation} \label{eqn:gradSum}
\brac{\bv_t, a_t} ~=~ \sum_{j=1}^t \ell'(p_j,\, y_{\pi_j}) \; \brac{x_{\pi_j}, 1}~~.
\end{equation}
On one hand, the gradient sum can be computed using sparse vector
operations. On the other hand, we prove that the linear predictor
$\brac{\bw_t, b_t}$ can be easily recovered from $\brac{\bv_t, a_t}$.
\begin{lemma} \label{lem:v}
  Let $\brac{\bw_t, b_t}$ be as defined in \cref{eqn:GDStep} and let
  $\brac{\bv_t, a_t}$ be as defined in \cref{eqn:gradSum}. Then, it
  holds for all $t \geq 1$ that $\frac{-1}{\lambda t}\brac{\bv_t, a_t} = \brac{\bw_t, b_t}$.
\end{lemma}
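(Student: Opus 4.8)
My plan is to prove the identity by induction on $t$, working directly from the recursion \cref{eqn:GDStep} and the definition \cref{eqn:gradSum}. Before starting, I would point out that both sequences $\big(\brac{\bw_t,b_t}\big)_{t\ge0}$ and $\big(\brac{\bv_t,a_t}\big)_{t\ge1}$ are driven by the \emph{same} scalars $p_j = \bw_{j-1}\cdot x_{\pi_j} + b_{j-1}$, so the $t$-th increment appearing in the SGD step, namely $-\tfrac{1}{\lambda t}\ell'(p_t,y_{\pi_t})\brac{x_{\pi_t},1}$, coincides (up to the factor $-1/(\lambda t)$) with the $t$-th summand that defines $\brac{\bv_t,a_t}$. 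There is thus no circularity to untangle: the two objects are built in lockstep and the lemma merely records the relationship between them.

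For the base case $t=1$, I would observe that the coefficient $1-\tfrac1t$ in \cref{eqn:GDStep} vanishes (and $\brac{\bw_0,b_0}=0$ in any case), so $\brac{\bw_1,b_1} = -\tfrac{1}{\lambda}\ell'(p_1,y_{\pi_1})\brac{x_{\pi_1},1}$, whereas $\brac{\bv_1,a_1}=\ell'(p_1,y_{\pi_1})\brac{x_{\pi_1},1}$ by \cref{eqn:gradSum}; dividing the latter by $-\lambda$ yields the former, which is the claim for $t=1$.

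For the inductive step I would fix $t\ge2$, assume $\brac{\bw_{t-1},b_{t-1}} = \tfrac{-1}{\lambda(t-1)}\brac{\bv_{t-1},a_{t-1}}$, and substitute this into \cref{eqn:GDStep}. The key cancellation is that $\bigl(1-\tfrac1t\bigr)\cdot\tfrac{-1}{\lambda(t-1)} = \tfrac{t-1}{t}\cdot\tfrac{-1}{\lambda(t-1)} = \tfrac{-1}{\lambda t}$, which lets me rewrite the step as $\brac{\bw_t,b_t} = \tfrac{-1}{\lambda t}\bigl(\brac{\bv_{t-1},a_{t-1}} + \ell'(p_t,y_{\pi_t})\brac{x_{\pi_t},1}\bigr)$. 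The parenthesized expression is exactly $\brac{\bv_t,a_t}$ by \cref{eqn:gradSum}, which closes the induction.

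I do not expect a genuine obstacle; the proof is a short algebraic induction. The only things to be careful about are the bookkeeping of the telescoping factor $\tfrac{t-1}{t}$ and explicitly noting that the $p_j$'s referenced in \cref{eqn:gradSum} are precisely those generated by the SGD recursion, so that the increments line up term by term. I would keep the final writeup to a few lines.
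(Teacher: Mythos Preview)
Your proof is correct and is essentially the same induction as the paper's: both arguments use the base case $t=1$ from \cref{eqn:GDStep} and, for $t\ge2$, the cancellation $(1-\tfrac{1}{t})\cdot\tfrac{1}{t-1}=\tfrac{1}{t}$ to reduce the SGD recursion to the incremental form $\brac{\bv_t,a_t}=\brac{\bv_{t-1},a_{t-1}}+\ell'(p_t,y_{\pi_t})\brac{x_{\pi_t},1}$. The only cosmetic difference is that the paper frames it as ``define $\brac{\bv_t,a_t}=-\lambda t\,\brac{\bw_t,b_t}$ and verify \cref{eqn:gradSum},'' whereas you run the induction in the forward direction; the algebra is identical.
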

\begin{proof}
It is easier to prove the equivalent opposite direction: we assume
that we defined $\brac{\bv_t, a_t} = -\lambda t \, \brac{\bw_t, b_t}$
and prove that \cref{eqn:gradSum} follows.

For $t=1$, \cref{eqn:GDStep} implies that $\brac{\bw_1, b_1} =
\frac{1}{\lambda} - \ell'(0,\, y_{\pi_t})$. Scaling both sides of this
equality by $\lambda$ and using the assumption gives $\brac{\bv_1,
  a_1} = \ell'(0,\, y_{\pi_t})$, which is consistent with
\cref{eqn:gradSum}.  For $t \geq 2$, we replace $\brac{\bw_t, b_t}$
with $\frac{-1}{\lambda t}\brac{\bv_t, a_t}$ in \cref{eqn:GDStep} to
get
\begin{align*}
  \frac{-1}{\lambda t} \brac{\bv_t, a_t} ~=~& \left(1 - \frac{1}{t} \right) \frac{-1}{\lambda (t-1)} \brac{\bv_{t-1}, a_{t-1}} ~-~ \frac{\ell'(p_t,\, y_{\pi_t})}{\lambda t} \; \brac{x_{\pi_t}, 1}~~.
\end{align*}
Using the fact that $(1-\frac{1}{t})\frac{1}{t-1} = \frac{1}{t}$, we
multiply both sizes of the equation above by $-\lambda t$ and get
$$
\brac{\bv_t, a_t} ~=~ \brac{\bv_{t-1}, a_{t-1}} ~+~ \ell'(p_t,\, y_{\pi_t}) \; \brac{x_{\pi_t}, 1}~~.
$$
This implies \cref{eqn:gradSum}, which concludes the proof.
\end{proof}
We can now rewrite the prediction $p_t$ in terms of $\bv_t$ and
$a_t$. For $t=1$, it simply holds that $p_1 = 0$. For $t \geq 2$, we
use \cref{lem:v} and get
$$
p_{t} ~=~ \frac{-1}{\lambda (t-1)} \big( \bv_{t-1} \cdot x_{\pi_t} + a_{t-1} \big)~~.
$$
We are now ready to design an efficient implementation of SGD. Our
algorithm computes the sequence of gradient sums $\big((\bv_t, a_t)
\big)_{t=0}^T$ using only sparse vector operations. Whenever needed,
the linear predictor $(\bw_t, b_t)$ can be recovered from $(\bv_t,
a_t)$ by performing a one-time dense rescaling by $\frac{-1}{\lambda
  t}$. The pseudocode for this algorithm appears in \cref{alg:sgd}.

\begin{algorithm}[t]
\begin{algorithmic}[1]
\algsetblock[Name]{rename}{}{0}{1cm}
\Function{SGD}{$T, \lambda, \{(x_t, y_t)\}_{i=1}^m$} \vbox to 12pt{\vfill} \Comment{number of steps, regularization parameter, training set}
\State draw random indices $\pi_1,\ldots,\pi_{T}$ \vbox to 12pt{\vfill}
\State $g ~\gets~ \ell'(0, y_{\pi_1 })$ \vbox to 12pt{\vfill}
\State $\bv ~\gets~ g x_{\pi_1}$ \vbox to 12pt{\vfill}
\State $a ~\gets~ g$ \vbox to 12pt{\vfill}
\For {$t = 2,\ldots,T$ \vbox to 12pt{\vfill}} 
   \State $d ~\gets~ \bv \cdot x_{\pi_t}$ \vbox to 16pt{\vfill} \Comment{$\Ocal(k)$ operation}
   \State $p ~\gets~ \frac{-(d+a)}{\lambda (t-1)}$ \vbox to 11pt{\vfill} \Comment{note that $p = \bw_{t-1} \cdot x_{\pi_{t}} + b_{t-1}$}
   \State $g ~\gets~ \ell'(p, y_{\pi_t})$ \vbox to 9pt{\vfill} 
   \State $\bv ~\gets~ \bv + g x_{\pi_t}$ \vbox to 11pt{\vfill} \Comment{$\Ocal(k)$ operation}
   \State $a ~\gets~ a + g $ \vbox to 11pt{\vfill} 
\EndFor
\State $\bw ~\gets~ \frac{-1}{\lambda T}\,\bv$ \vbox to 16pt{\vfill} \Comment{$\Ocal(n)$ operation outside the loop}
\State $b ~\gets~ \frac{-a}{\lambda T}$ \vbox to 11pt{\vfill} 
\State \Return $\brac{\bw, b}$ \vbox to 11pt{\vfill}
\EndFunction
\end{algorithmic}
\caption{SGD for regularized linear learning with sparse data}
\label{alg:sgd}
\end{algorithm}

\section{Averaged Stochastic Gradient Descent}
The SGD algorithm in \cref{alg:sgd} implicitly constructs a sequence
of intermediate linear predictors and returns the last predictor in
the sequence. Ruppert \cite{ruppert88} and Polyak
\cite{polyak90,polyak92} independently argued that the last predictor
may be suboptimal, and that the \emph{average} of the intermediate
predictors is a better choice.  Intuitively, the average predictor is
more stable than the last predictor, and this stability allows us to
prove strong convergence results.

Specifically, we define
\begin{equation}\label{eqn:naiveAverage}
  \ow_t ~=~ \frac{1}{t} \, \sum_{j=1}^t \bw_j \quad\text{and}\quad
  \ob_t ~=~ \frac{1}{t} \, \sum_{j=1}^t b_j~~,
\end{equation}
and we wish to modify \cref{alg:sgd} to return
$(\ow_T,\ob_T)$. This technique is called Averaged SGD, or ASGD.

To compute $\ob_t$, we use \cref{lem:v} and write
$$
\ob_t ~=~ \frac{1}{t} \, \sum_{j=1}^t b_j ~=~ \frac{-1}{\lambda t} \, \sum_{j=1}^t \frac{a_j}{j} ~~. 
$$
Using the above, we can modify \cref{alg:sgd} to incrementally compute the term 
\begin{equation} \label{eqn:avgBias}
  c_t ~=~ \sum_{j=1}^t \frac{a_j}{j}~~,
\end{equation}
and when needed, to recover
\begin{equation}\label{eqn:obDef}
\ob_t ~=~ \frac{-c_t}{\lambda t}~~.
\end{equation}

Computing $\ow_t$ requires more care, because the vector addition in
\cref{eqn:naiveAverage} involves dense vectors, and a straightforward
computation of $\ow_t$ would require $\Ocal(tn)$ operations. To avoid
these dense vector operations, we apply \cref{lem:v}, and get
$$
\bw_j ~=~ \frac{-1}{\lambda j} \sum_{i=1}^j \ell'(p_i,\, y_{\pi_i}) \; x_{\pi_i}~~.
$$
Plugging the above into \cref{eqn:naiveAverage} gives
\begin{align*} 
\ow_t ~&=~ \frac{1}{t} \sum_{j=1}^t \left( \frac{-1}{\lambda j} \sum_{i=1}^j \ell'(p_i,\, y_{\pi_i}) \; x_{\pi_i} \right)~~.
\end{align*}
Rearranging the order of the two sums and using $h_i = \sum_{j=1}^i
\frac{1}{j}$ to denote the $i$'th harmonic number, we get
\begin{align}   
  \ow_t ~&=~ \frac{-1}{\lambda t} \sum_{i=1}^t \left(\sum_{j=i}^t \frac{1}{j}\right) \ell'(p_i,\, y_{\pi_i}) \; x_{\pi_i} \nonumber \\
  &=~ \frac{-1}{\lambda t} \sum_{i=1}^t \left(h_t - h_{i-1}\right) \ell'(p_i,\, y_{\pi_i}) \; x_{\pi_i} \nonumber \\
  &=~ \frac{-1}{\lambda t} \left( h_t\,\bv_t - \sum_{i=1}^t h_{i-1}\, \ell'(p_i,\, y_{\pi_i}) \; x_{\pi_i} \right)~~. \label{eqn:sparseAverage}
\end{align}
We modify \cref{alg:sgd} to also incrementally compute the
\emph{harmonic gradient sum},
\begin{equation}\label{eqn:buDef}
\bu_t ~=~ \sum_{i=1}^t h_{i-1}\, \ell'(p_j,\, y_{\pi_j}) \; x_{\pi_j}~~.
\end{equation}
This definition allows us to write \cref{eqn:sparseAverage} as
\begin{equation}\label{eqn:owDef}
\ow_t ~=~ \frac{-1}{\lambda t} \big( h_t\,\bv_t - \bu_t \big)~~.
\end{equation}
With the formula above, $\ow_t$ can be recovered from $h_t$, $\bv_t$,
and $\bu_t$ when needed, via a dense vector operation. The pseudo-code
of the resulting ASGD implementation is presented in \cref{alg:asgd}.
\begin{algorithm}[t]
\begin{algorithmic}[1]
\algsetblock[Name]{rename}{}{0}{1cm}
\Function{ASGD}{$T, \lambda, \{(x_t, y_t)\}_{i=1}^m$} \vbox to 12pt{\vfill} \Comment{num of steps, regularization param, training set}
\State draw random indices $\pi_1,\ldots,\pi_{T}$ \vbox to 12pt{\vfill}
\State $g \gets \ell'(0, y_{\pi_1 })\;;\;\bv \gets g x_{\pi_1}\;;\;a \gets g$ \vbox to 11pt{\vfill} \Comment{same as SGD}
\State $\bu ~\gets~ 0^n$ \vbox to 11pt{\vfill} \Comment{see \cref{eqn:buDef}}
\State $c ~\gets~ a$ \vbox to 11pt{\vfill} \Comment{see \cref{eqn:avgBias}} 
\State $h \gets 1$ \vbox to 11pt{\vfill} \Comment{first harmonic number}
\For {$t = 2,\ldots,T$ \vbox to 12pt{\vfill}}
   \State $d \gets \bv \cdot x_{\pi_t}\;;\;p \gets \frac{-(d+a)}{\lambda (t-1)}\;;\;g \gets \ell'(p, y_{\pi_t})\;;\;\bv \gets \bv + g x_{\pi_t}\;;\;a \gets a + g$ \vbox to 16pt{\vfill} \Comment{same as SGD}
   \State $\bu ~\gets~ \bu + h g x_{\pi_t}$ \vbox to 9pt{\vfill} \Comment{$\Ocal(k)$ operation, see \cref{eqn:buDef}}
   \State $c ~\gets~ c + \frac{a}{t}$ \vbox to 11pt{\vfill} \Comment{see \cref{eqn:avgBias}} 
   \State $h ~\gets~ h + \frac{1}{t}$ \vbox to 11pt{\vfill} \Comment{$t$'th harmonic number}
\EndFor
\State $\ow ~\gets~ \frac{-1}{\lambda T}\,(h \bv - \bu)$ \vbox to 16pt{\vfill} \Comment{$\Ocal(n)$ operation outside the loop, see \cref{eqn:owDef}}
\State $\ob ~\gets~ \frac{-c}{\lambda T}$ \vbox to 11pt{\vfill} \Comment{see \cref{eqn:obDef}}
\State \Return $\brac{\ow, \ob}$ \vbox to 10pt{\vfill}
\EndFunction
\end{algorithmic}
\caption{ASGD for regularized linear learning with sparse data}
\label{alg:asgd}
\end{algorithm}

\section{Centering and Translation Invariance}
A disadvantage of the problem formulation in \cref{eqn:objective} is
that it is sensitive to translation (a.k.a. offset) of the training
data (namely, adding a constant vector to each feature vector in the
training set). The root of the problem is the term $b^2$ in
\cref{eqn:objective}, which discourages large values of $b$. There are
several different ways to make our algorithms translation invariant. A
simple but effective technique is to center the training
data. Centering is the process of computing the mean feature vector,
$\ox = \frac{1}{m} \sum_{j=1}^m x_j$, and subtracting it from each
$x_t$. After we center the data, a predictor with a bias of $b=0$ is
one that passes through the training data's center-of-mass.

If we apply the transformation $x \mapsto x - \ox$ to the training set
and train a predictor $(\bw,b)$, we must apply the same centering
transformation to new feature vectors before using $(\bw,b)$ to
predict their labels. There are two equivalent ways of doing this,
\emph{explicit centering} and \emph{implicit centering}. Explicit
centering involves two consecutive steps: first, create a centered
version of the feature vector $x' = x - \ox$; then, apply the
predictor to $x'$ and predict the value $\bw \cdot x' + b$. A
disadvantage of explicit centering is that it requires us to store
$\ox$ alongside $\bw$ and $b$, as part of the predictor definition. On
the other hand, implicit centering hides the centering transformation
in the bias term. Specifically, define a new bias term
\begin{equation}\label{eqn:explicitToImplicit}
  b' ~=~ b - \bw \cdot \ox~~,
\end{equation}
and apply the predictor directly to the original (uncentered) feature
vector $x$. In other words, the prediction is computed as $\bw \cdot x
+ b'$.  The two centering techniques are equivalent because
$$
\bw \cdot x' + b ~=~ \bw \cdot (x - \ox) + b ~=~
\bw \cdot x + (b - \bw \cdot \ox) ~=~
\bw \cdot x + b'~~.
$$
The advantage of implicit centering is that it allows us to forget
$\ox$ and to make predictions as if we had not used the centering
technique at all. However, note that $b'$ cannot be computed until
training concludes and $\bw$ is available.

The technical difficulty of centering sparse feature vectors is that
$\ox$ is likely a dense vector, and therefore each centered feature
vector, $(x_t - \ox)$, is dense as well. Therefore, explicitly
centering the entire training set would require $m$ dense operations
and would prevent us from using sparse vector operations during
training. In this section, we describe how to apply the centering
technique implicitly, without using dense operations.

Imagine repeating the entire derivation from the previous sections,
replacing every appearance of $x_{\pi_t}$ with $(x_{\pi_t} - \ox)$. In
particular, we would get
$$
  \brac{\bw_t, b_t} ~=~ \frac{-1}{\lambda t} \sum_{j=1}^t \ell'(p_j,\, y_{\pi_j}) \; \brac{(x_{\pi_j} - \ox), 1}~~.
$$
Reusing the definition of $\brac{\bv_t, a_t}$ from \cref{eqn:gradSum}, we rewrite the above as
\begin{equation}\label{eqn:wCenter}
\bw_t ~=~ \frac{-1}{\lambda t} ( \bv_t - a_t \ox )
\quad \text{and} \quad
b_t ~=~ \frac{-a_t}{\lambda t} ~~.
\end{equation}
We can now use \cref{eqn:explicitToImplicit} to calculate the bias
term of the implicit representation,
$$
b'_t ~=~ b_t - \bw_t \cdot \ox ~=~  
\frac{-a_t}{\lambda t} ~-~ \frac{-1}{\lambda t} ( \bv_t - a_t \ox ) \cdot \ox 
~=~ \frac{-1}{\lambda t} \big(a_t (1+\norm{\ox}^2) - \bv_t \cdot \ox \big)~~.
$$
Consider the amount of work it would take us to obtain each of the
terms above.  The term $a_t$ can be computed as in \cref{alg:asgd}.
The mean feature vector $\ox$ can be precomputed with $\Ocal(mk)$
operations, and $(1 + \norm{\ox}^2)$ can be precomputed using a single
dense operation. The only term that poses a potential problem is
$\bv_t \cdot \ox$, which is a dot product of two dense vectors.  To
overcome this problem, we introduce the \emph{projection sum}
variable, defined as
$$
z_t ~=~ \bv_t \cdot \ox ~=~ \sum_{j=1}^t \ell'(p_j,\, y_{\pi_j})\, \ox \cdot x_{\pi_j} ~~.
$$
On one hand, we can modify \cref{alg:asgd} to incrementally compute $z_t$, as
\begin{equation}\label{eqn:zDef}
  z_t ~=~ z_{t-1} ~+~ \ell'(p_t,\, y_{\pi_t})\, \ox \cdot x_{\pi_t} ~~.
\end{equation}
On the other hand, with $z_t$ handy, we can easily compute 
\begin{equation}\label{eqn:bTagCenter}
b'_t ~=~ \frac{-r_t}{\lambda t} 
\quad \text{where} \quad r_t ~=~ a_t (1+\norm{\ox}^2) - z_t
~~.
\end{equation}
Plugging \cref{eqn:wCenter} and \cref{eqn:bTagCenter} into $p_{t} = \bw_{t-1} \cdot x_{\pi_t} + b'_{t-1}$ gives
\begin{equation}\label{eqn:pCenterDef}
p_t ~=~ \frac{-1}{\lambda (t-1)} \big(\bv_{t-1} \cdot x_{\pi_t} + r_{t-1} - a_{t-1}\, \ox \cdot x_{\pi_t} \big)~~.
\end{equation}
We have everything we need to implement a centered version of SGD
using sparse operations, but we are really after a centered version of
ASGD, a.k.a. CASGD. Namely, we need to modify \cref{alg:asgd} to
return $(\ow_T,\ob'_T)$, where
\begin{equation}\label{eqn:centerAvg}
  \ow_t ~=~ \frac{1}{t} \, \sum_{j=1}^t \bw_j \quad\text{and}\quad
  \ob'_t ~=~ \frac{1}{t} \, \sum_{j=1}^t b'_j~~.
\end{equation}
To help us compute $\ob'_t$, we further modify \cref{alg:asgd} to incrementally compute
\begin{equation}\label{eqn:sDef}
  s_t ~=~ \sum_{j=1}^t \frac{r_j}{j}~~.
\end{equation}
The value of $\ob'_t$ can be recovered as
\begin{equation}\label{obCenterDef}
\ob'_t ~=~ \frac{1}{t} \sum_{j=1}^t b'_j ~=~ \frac{-1}{\lambda t} \sum_{j=1}^t \frac{r_j}{j} ~=~
\frac{-s_t}{\lambda t}~~.
\end{equation}
To compute $\ow_t$, we plug the definition of $\bw_t$ from
\cref{eqn:wCenter} into \cref{eqn:centerAvg} to get
\begin{align*}
  \ow_t ~&=~ \frac{1}{t} \, \sum_{j=1}^t \frac{-1}{\lambda j} ( \bv_j - a_j \ox ) \\
  &=~ \frac{-1}{\lambda t} \, \Bigg( \underbrace{\sum_{j=1}^t \frac{1}{j} \sum_{i=1}^j \ell'(p_i,\, y_{\pi_i}) \; x_{\pi_i}}_{(i)}
  - \underbrace{\sum_{j=1}^t \frac{a_j}{j} \ox}_{(ii)} \Bigg)~~.
\end{align*}
Both $(i)$ and $(ii)$ above should look familiar, as we encountered
them in the previous section. We rewrite the double sum in $(i)$ as we did in \cref{eqn:owDef},
and we rewrite the sum in $(ii)$ using \cref{eqn:avgBias}, to get
\begin{equation}\label{eqn:owCenterDef}
\ow_t ~=~ \frac{-1}{\lambda t} \big( h_t \bv_t - \bu_t - c_t \ox \big)~~.
\end{equation}
The pseudo-code the resulting CASGD implementation appears in \cref{alg:casgd}

\begin{algorithm}[t]
\begin{algorithmic}[1]
\algsetblock[Name]{rename}{}{0}{1cm}
\Function{CASGD}{$T, \lambda, \{(x_t, y_t)\}_{i=1}^m$} \vbox to 12pt{\vfill} \Comment{num of steps, regularization param, training set}
\State $\ox ~\gets~ \frac{1}{m} \sum_{i=1}^m x_i$ \vbox to 12pt{\vfill} \Comment{$\Ocal(mk)$ operation outside the loop}
\State $\theta ~\gets~ 1 + \norm{\ox}^2$ \vbox to 12pt{\vfill} \Comment{$\Ocal(n)$ operation outside the loop}
\State draw random indices $\pi_1,\ldots,\pi_{T}$ \vbox to 12pt{\vfill}
\State $g \gets \ell'(0, y_{\pi_1 })\;;\;\bv \gets g x_{\pi_1}\;;\;a \gets g\;;\;\bu \gets 0^n\;;\;c \gets a\;;\;h \gets 1$ \vbox to 12pt{\vfill} \Comment{same as ASGD}
\State $q \gets \ox \cdot x_{\pi_1}$ \vbox to 12pt{\vfill} \Comment{$\Ocal(k)$ operation}
\State $z ~\gets~ gq$ \vbox to 10pt{\vfill} \Comment{see \cref{eqn:zDef}}
\State $r ~\gets~ a\theta - z$ \vbox to 11pt{\vfill} \Comment{see \cref{eqn:bTagCenter}}
\State $s ~\gets~ r$ \vbox to 11pt{\vfill} \Comment{see \cref{eqn:sDef}}

\For {$t = 1,\ldots,T$ \vbox to 12pt{\vfill}}
   \State $d ~\gets~ \bv \cdot x_{\pi_t}$ \vbox to 16pt{\vfill} \Comment{same as ASGD}
   \State $q ~\gets~ \ox \cdot x_{\pi_t}$ \vbox to 11pt{\vfill} \Comment{$\Ocal(k)$ operation}
   \State $p ~\gets~ \frac{-(d+r-aq)}{\lambda (t-1)}$ \vbox to 11pt{\vfill} \Comment{see \cref{eqn:pCenterDef}}
   \State $g \gets \ell'(p, y_{\pi_t})\;;\;\bv \gets \bv + g x_{\pi_t}\;;\;a \gets a + g$ \vbox to 11pt{\vfill} \Comment{same as ASGD}
   \State $\bu \gets \bu + h g x_{\pi_t}\;;\;c \gets c + \frac{a}{t}\;;\;h \gets h + \frac{1}{t}$ \vbox to 11pt{\vfill} \Comment{same as ASGD}
   \State $z ~\gets~ z + gq$ \vbox to 10pt{\vfill} \Comment{see \cref{eqn:zDef}}
   \State $r ~\gets~ a\theta - z$ \vbox to 11pt{\vfill} \Comment{see \cref{eqn:bTagCenter}}
   \State $s ~\gets~ s + \frac{r}{t}$ \vbox to 11pt{\vfill} \Comment{see \cref{eqn:sDef}}
\EndFor
\State $\ow ~\gets~ \frac{-1}{\lambda T}\,\big( h \bv - \bu - c \ox \big)$ \vbox to 16pt{\vfill} \Comment{$\Ocal(n)$ operation outside the loop, see \cref{eqn:owCenterDef}}
\State $\ob' ~\gets~ \frac{-s}{\lambda T}$ \vbox to 11pt{\vfill} \Comment{see \cref{obCenterDef}}
\State \Return $\brac{\ow, \ob'}$ \vbox to 11pt{\vfill}
\EndFunction
\end{algorithmic}
\caption{CASGD for regularized linear learning with sparse data}
\label{alg:casgd}
\end{algorithm}

\pagebreak
\bibliographystyle{plain}
\bibliography{bib}

\end{document}